\renewcommand\footnotetextcopyrightpermission[1]{} 
\newcommand{\name}{{GLEE}}
\newcommand{\longname}{Geometric Laplacian Eigenmap Embedding}
\DeclareMathOperator*{\argmin}{arg\,min}
\DeclareMathOperator*{\argmax}{arg\,max}
\def\BibTeX{{\rm B\kern-.05em{\sc i\kern-.025em b}\kern-.08emT\kern-.1667em\lower.7ex\hbox{E}\kern-.125emX}}
\begin{document}

\title{GLEE: Geometric Laplacian Eigenmap Embedding}

\author{Leo Torres}
\email{leo@leotrs.com}
\affiliation{
  \institution{Network Science Institute \\
  Northeastern University}
  \city{Boston}
  \state{MA}
  \postcode{02115}
}

\author{Kevin S. Chan}
\email{kevin.s.chan.civ@mail.mil}
\affiliation{%
  \institution{U.S. Army Research Lab}
  \city{Adelphi}
  \state{MD}
  \postcode{20783}
}

\author{Tina Eliassi-Rad}
\email{tina@eliassi.org}
\affiliation{%
  \institution{Network Science Institute \\
  Khoury College of Computer Sciences \\
  Northeastern University}
  \city{Boston}
  \state{MA}
  \postcode{02115}
}

%
\renewcommand{\shortauthors}{Torres et al.}

\begin{abstract}
{Graph embedding seeks to build a low-dimensional representation of a graph $G$. This low-dimensional representation is then used for various downstream tasks. One popular approach is Laplacian Eigenmaps, which constructs a graph embedding based on the spectral properties of the Laplacian matrix of $G$. The intuition behind it, and many other embedding techniques, is that the embedding of a graph must respect node similarity: similar nodes must have embeddings that are close to one another. Here, we dispose of this distance-minimization assumption. Instead, we use the Laplacian matrix to find an embedding with geometric properties instead of spectral ones, by leveraging the so-called simplex geometry of $G$. We introduce a new approach, \emph{\longname{}} (or \name{} for short), and demonstrate that it outperforms various other techniques (including Laplacian Eigenmaps) in the tasks of graph reconstruction and link prediction.}
{Graph embedding, graph Laplacian, simplex geometry.}
\end{abstract}

\keywords{Graph embedding, graph Laplacian, simplex geometry.}

\maketitle

\section{Introduction}
Graphs are ubiquitous in real-world systems from the internet to the world wide web to social media to the human brain.  The application of machine learning to graphs is a popular and active research area. One way to apply known machine learning methods to graphs is by transforming the graph into a representation that can be directly fed to a general machine learning pipeline. For this purpose, the task of graph representation learning, or graph embedding, seeks to build a vector representation of a graph by assigning to each node a feature vector that can then be fed into any machine learning algorithm.

Popular graph embedding techniques seek an embedding where the distance between the latent representations of two nodes represents their similarity. For example, \citet{chen2018tutorial} calls this the ``community aware'' property (nodes in a community are considered similar, and thus their representations must be close to one another), while \citet{ChenNAKF17} calls it a ``symmetry'' between the node domain and the embedding domain. Others call methods based on this property with various names such as ``positional'' embeddings \citep{ribeiro2019} or ``proximity-based'' embeddings \citep{JinRKKRK19}. Consequently, many of these approaches are formulated in such a way that the distance (in the embedding space) between nodes that are similar (in the original data domain) is small. Here, we present a different approach. Instead of focusing on minimizing the distance between similar nodes, we seek an embedding that preserves the most basic structural property of the graph, namely adjacency; the works \citep{ribeiro2019,JinRKKRK19} call this approach ``structural'' node embeddings. Concretely, if the nodes $i$ and $j$ are neighbors in the graph $G$ with $n$ nodes, we seek $d$-dimensional vectors $s_i$ and $s_j$ such that the adjacency between $i$ and $j$ is encoded in the geometric properties of $s_i$ and $s_j$, for some $d \ll n$. Examples of geometric properties are the dot product of two vectors (which is a measure of the angle between them), the length (or area or volume) of a line segment (or polygon or polyhedron), the center of mass or the convex hull of a set of vectors, among others. In Section~\ref{sec:geom} we propose one such geometric embedding technique, called \longname{} (\name{}), that is based on the properties of the Laplacian matrix of $G$, and we then proceed to compare it to the original formulation of Laplacian Eigenmaps as well as other popular embedding techniques.

\name{} has deep connections with the so-called simplex geometry of the Laplacian \citep{simplex,fiedler}. \citet{fiedler} first made this observation, which highlights the bijective correspondence between the Laplacian matrix of an undirected, weighted graph and a geometric object known as a \textit{simplex}. Using this relationship, we find a graph embedding such that the representations $s_i, s_j$ of two non-adjacent nodes $i$ and $j$ are always orthogonal, $s_i \cdot s_j = 0$, thus achieving a geometric encoding of adjacency. Note that this does not satisfy the ``community aware'' property of \citep{chen2018tutorial}. For example, the geometric embedding $s_i$ of node $i$ will be orthogonal to each non-neighboring node, including those in its community. Thus, $s_i$ is not close to other nodes in its community, whether we define \textit{closeness} in terms of Euclidean distance or  cosine similarity. However, we show that this embedding -- based on the simplex geometry -- contains desirable information, and that it outperforms the original, distance-minimizing, formulation of Laplacian Eigenmaps (LE) on the tasks of graph reconstruction and link prediction in certain cases.

The contributions of this work are as follows. 

\begin{enumerate}
\item We present a geometric framework for graph embedding that departs from the tradition of looking for representations that minimize the distance between similar nodes by highlighting the intrinsic geometric properties of the Laplacian matrix.

\item The proposed method, \longname{} (\name{}), while closely related to the Laplacian Eigenmaps (LE) method, outperforms LE in the tasks of link prediction and graph reconstruction. Moreover, a common critique of LE is that it only considers first-order adjacency in the graph. We show that \name{} takes into account higher order connections (see Section~\ref{sub:prediction}).

\item The performance of existing graph embedding methods (which minimize distance between similar nodes) suffers when the graph's average clustering coefficient is low.  This is not the case for GLEE.
\end{enumerate}

In Section~\ref{sec:background} we recall the original formulation of LE, in order to define the \longname{} (\name{}) in Section~\ref{sec:geom} and discuss its geometric properties. We mention related work in Section~\ref{sec:related} and present experimental studies of \name{} in Section~\ref{sec:exp}. We finish with concluding remarks in Section~\ref{sec:conclusions}.

\section{Background on Laplacian Eigenmaps}\label{sec:background}

Belkin and Niyogi~\citep{BelkinN01,BelkinN03} introduced Laplacian Eigenmaps as a general-purpose method for embedding and clustering an arbitrary data set. Given a data set $\{x_i\}_{i=1}^{n}$, a proximity graph $G=(V,A)$ is constructed with node set $V=\{x_i\}$ and edge weights $\mathbf{A}=(a_{ij})$. The edge weights are built using one of many heuristics that determine which nodes are close to each other and can be binary or real-valued. Some examples are $k$ nearest neighbors, $\epsilon$-neighborhoods, heat kernels, etc. To perform the embedding, one considers the Laplacian matrix of $G$, defined as $\mathbf{L}=\mathbf{D}-\mathbf{A}$, where $\mathbf{D}$ is the diagonal matrix whose entries are the degrees of each node. One of the defining properties of $\mathbf{L}$ is the value of the quadratic form:
\begin{equation}\label{eqn:quadraticform}
y^T \mathbf{L} y = \frac{1}{2} \sum_{i,j} a_{ij}(y_i - y_j)^2.    
\end{equation}
The vector $y^*$ that minimizes the value of (\ref{eqn:quadraticform}) will be such that the total weighted distance between all pairs of nodes is minimized. Here, $y_i$ can be thought of as the one-dimensional embedding of node $i$. One can then extend this procedure to arbitrary $d$-dimensional node embeddings by noting that $tr(\mathbf{Y^T} \mathbf{L} \mathbf{Y}) = \sum_{i,j} a_{ij}\|y_i - y_j\|^2$, where $\mathbf{Y} \in \mathbb{R}^{n \times d}$ and $y_i$ is the $i$th row of $\mathbf{Y}$. The objective function in this case is
\begin{equation}\label{eqn:distmin}
\begin{aligned}
\mathbf{Y^*} = \argmin_{\mathbf{Y} \in \mathbb{R}^{n \times d}} \, &tr(\mathbf{Y^T} \mathbf{L} \mathbf{Y}) \\
\text{s.t.} \,& \mathbf{Y^T} \mathbf{D} \mathbf{Y} = \mathbf{I}
\end{aligned}
\end{equation}
Importantly, the quantity $tr(\mathbf{Y^T} \mathbf{L} \mathbf{Y})$ has a global minimum at $\mathbf{Y}=0$. Therefore, a restriction is necessary to guarantee a non-trivial solution. Belkin and Niyogi~\citep{BelkinN01,BelkinN03} choose $\mathbf{Y^T} \mathbf{D} \mathbf{Y} = \mathbf{I}$, though others are possible. Applying the method of Lagrange multipliers, one can see that the solution of (2) is achieved at the matrix $\mathbf{Y^*}$ whose rows $y_i^*$ are the solutions to the eigenvalue problem
\begin{equation}\label{eqn:distmin-sol}
\mathbf{L} y_i^* = \lambda_i \mathbf{D} y_i^*.
\end{equation}
When the graph contains no isolated nodes, $y_i^*$ is then an eigenvector of the matrix $\mathbf{D^{-1}}\mathbf{L}$, also known as the normalized Laplacian matrix. The embedding of a node $j$ is then the vector whose entries are the $j$th elements of the eigenvectors $y_1^*, y_2^*, ..., y_d^*$.

\section{Proposed Approach: Geometric Laplacian Eigenmaps}\label{sec:geom}
We first give our definition and then proceed to discuss both the algebraic and geometric motivations behind it.

\begin{definition}[GLEE]\label{def:glee}
Given a graph $G$, consider its Laplacian matrix $\mathbf{L}$. Using singular value decomposition we may write $\mathbf{L} = \mathbf{S} \mathbf{S^T}$ for a unique matrix $\mathbf{S}$. Define $\mathbf{S^d}$ as the matrix of the first $d$ columns of $\mathbf{S}$. If $i$ is a node of $G$, define its $d$-dimensional \emph{\longname{}} (\name{}) as the $i$th row of $\mathbf{S^d}$, denoted by $s^d_i$. If the dimension $d$ is unambiguous, we will just write $s_i$.
\end{definition}

\paragraph{Algebraic motivation} In the case of positive semidefinite matrices, such as the Laplacian, the singular values coincide with the eigenvalues. Moreover, it is well known that $\mathbf{S^d}$ is the matrix of rank $d$ that is closest to $\mathbf{L}$ in Frobenius norm, i.e., $\| \mathbf{L} - \mathbf{S^d} \mathbf{(S^d)^T} \|_{F} \le \| \mathbf{L} - \mathbf{M} \|_F$ for all matrices $\mathbf{M}$ of rank $d$. Because of this, we expect $\mathbf{S^d}$ to achieve better performance in the graph reconstruction task than any other $d$-dimensional embedding (see Section~\ref{sub:reconstruction_exp}).

As can be seen from Equation (\ref{eqn:quadraticform}), the original formulation of Laplacian Eigenmaps is due to the fact that the distance between the embeddings of neighboring nodes is minimized, under the restriction $Y^T D Y = I$. We can also formulate \name{} in terms of the distance between neighboring nodes. Perhaps counterintuitively, \name{} solves a distance \emph{maximization} problem, as follows. The proof follows from a routinary application of Lagrange multipliers and is omitted.

\begin{theorem}\label{thm:distminsimplex}
Let $\mathbf{\Lambda}$ be the diagonal matrix whose entries are the eigenvalues of $\mathbf{L}$. Consider the optimization problem
\begin{equation}
\begin{aligned}
\argmax_{\mathbf{Y} \in \mathbb{R}^{n \times d}} \, &tr(\mathbf{Y^T} \mathbf{L} \mathbf{Y}) \\
\text{s.t.} \,& \mathbf{Y^T} \mathbf{Y} = \mathbf{\Lambda}.
\end{aligned}
\end{equation}
Its solution is the matrix $\mathbf{S^d}$ whose columns are the eigenvectors corresponding to the largest eigenvalues of $\mathbf{L}$. If $d=n$ then ${\mathbf{L} = \mathbf{S^d} \mathbf{\left(S^d\right)^T}}$. \qed
\end{theorem}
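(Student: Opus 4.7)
The plan is to apply the method of Lagrange multipliers, mirroring the derivation leading from (\ref{eqn:distmin}) to (\ref{eqn:distmin-sol}). I would introduce a symmetric $d \times d$ matrix $\mathbf{M}$ of multipliers for the matrix constraint $\mathbf{Y^T}\mathbf{Y} = \mathbf{\Lambda}$, form the Lagrangian
$$\mathcal{L}(\mathbf{Y}, \mathbf{M}) = tr(\mathbf{Y^T}\mathbf{L}\mathbf{Y}) - tr\bigl(\mathbf{M}(\mathbf{Y^T}\mathbf{Y} - \mathbf{\Lambda})\bigr),$$
and take the entrywise derivative in $\mathbf{Y}$. Using symmetry of $\mathbf{L}$ and $\mathbf{M}$, this yields the stationarity condition $\mathbf{L}\mathbf{Y} = \mathbf{Y}\mathbf{M}$; that is, the column span of $\mathbf{Y}$ is $\mathbf{L}$-invariant and $\mathbf{M}$ represents $\mathbf{L}$ on it.

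Next I would show that $\mathbf{M}$ is forced to be diagonal. Left-multiplying the stationarity condition by $\mathbf{Y^T}$ and using the constraint gives $\mathbf{Y^T}\mathbf{L}\mathbf{Y} = \mathbf{\Lambda}\mathbf{M}$; the left side is symmetric, so $(\mathbf{\Lambda}\mathbf{M})^T = \mathbf{M}\mathbf{\Lambda} = \mathbf{\Lambda}\mathbf{M}$, meaning $\mathbf{M}$ commutes with $\mathbf{\Lambda}$. When the eigenvalues of $\mathbf{L}$ are distinct, this forces $\mathbf{M}$ to be diagonal outright; the degenerate case is handled by noting that any repeated-eigenvalue block of $\mathbf{M}$ can be diagonalized by an orthogonal rotation within the corresponding columns of $\mathbf{Y}$, a move that preserves both the objective and the constraint.

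With $\mathbf{M}$ diagonal, the stationarity condition says each column $y_i$ of $\mathbf{Y}$ is an eigenvector of $\mathbf{L}$ with eigenvalue $\mathbf{M}_{ii}$, and the constraint fixes $\|y_i\|^2 = \mathbf{\Lambda}_{ii}$. The objective then evaluates to $\sum_i y_i^T \mathbf{L} y_i = \sum_i \mathbf{M}_{ii} \mathbf{\Lambda}_{ii}$, a sum of products of the prescribed top-$d$ eigenvalues $\mathbf{\Lambda}_{ii}$ (acting as positive weights) against a choice of $d$ eigenvalues $\mathbf{M}_{ii}$ of $\mathbf{L}$ corresponding to pairwise orthogonal eigenvectors. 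A rearrangement-inequality argument then shows that the maximum is attained precisely when $\mathbf{M}_{ii} = \mathbf{\Lambda}_{ii}$ and $y_i$ equals the unit eigenvector for the $i$-th largest eigenvalue scaled to norm $\sqrt{\mathbf{\Lambda}_{ii}}$, i.e., the $i$-th column of $\mathbf{S^d}$. For the $d=n$ claim, I would compute directly from the eigendecomposition $\mathbf{L} = \mathbf{U}\mathbf{\Lambda}\mathbf{U^T}$: this gives $\mathbf{S} = \mathbf{U}\mathbf{\Lambda}^{1/2}$ and hence $\mathbf{S}\mathbf{S^T} = \mathbf{U}\mathbf{\Lambda}\mathbf{U^T} = \mathbf{L}$.

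The main obstacle I expect is the diagonalization step: the multiplier $\mathbf{M}$ is symmetric but not a priori diagonal, and pinning it down depends on the commutation with $\mathbf{\Lambda}$ plus, in the degenerate case, a careful rotation inside repeated-eigenvalue blocks. Once $\mathbf{M}$ is diagonal, the rest collapses to a one-line rearrangement inequality and a direct eigendecomposition computation, which justifies the author's description of the proof as "routinary."
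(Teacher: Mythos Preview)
Your proposal is correct and follows exactly the approach the paper indicates: the authors explicitly say ``the proof follows from a routinary application of Lagrange multipliers and is omitted,'' and your Lagrangian setup, stationarity condition $\mathbf{L}\mathbf{Y}=\mathbf{Y}\mathbf{M}$, diagonalization of $\mathbf{M}$ via commutation with $\mathbf{\Lambda}$, and rearrangement argument are a faithful and complete expansion of that sketch. Your handling of the degenerate (repeated-eigenvalue) case by rotating within blocks is a legitimate detail the paper does not mention but which is needed for full rigor.
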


The importance of Theorem \ref{thm:distminsimplex} is to highlight the fact that distance-minimization may be misleading when it comes to exploiting the properties of the embedding space. Indeed, the original formulation of Laplacian Eigenmaps, while well established in Equation~\ref{eqn:distmin}, yields as result the eigenvectors corresponding to the \emph{lowest} eigenvalues of $\mathbf{L}$. However, standard results in linear algebra tell us that the best low rank approximation of $\mathbf{L}$ is given by the eigenvectors corresponding to the \emph{largest} eigenvalues. Therefore, these are the ones used in the definition of \name{}.

\paragraph{Geometric motivation} The geometric reasons underlying Definition \ref{def:glee} are perhaps more interesting than the algebraic ones. A recent review paper \citep{simplex} highlights the work of \citet{fiedler}, who discovered a bijective correspondence between the Laplacian matrix of a graph and a higher-dimensional geometric object called a \emph{simplex}.
\begin{definition}
Given a set of $k+1$ $k$-dimensional points $\{p_i\}_{i=0}^{k}$, if they are affinely independent (i.e., if the set of $k$ points $\{p_0 - p_i\}_{i=1}^k$ is linearly independent), then their convex hull is called a \emph{simplex}.
\end{definition}
A simplex is a high-dimensional polyhedron that is the generalization of a 2-dimensional triangle or a 3-dimensional tetrahedron. To see the connection between the Laplacian matrix of a graph and simplex geometry we invoke the following result. The interested reader will find the proof in \citep{simplex,fiedler}.
\begin{theorem}\label{thm:simplex}
Let $\mathbf{Q}$ be a positive semidefinite $k \times k$ matrix. There exists a $k \times k$ matrix $\mathbf{S}$ such that $\mathbf{Q} = \mathbf{S} \mathbf{S^T}$. The rows of $\mathbf{S}$ lie at the vertices of a simplex if and only if the rank of $\mathbf{Q}$ is $k-1$. \qed
\end{theorem}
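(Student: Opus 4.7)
I would split the proof into the existence of the factorization and the biconditional. For existence, the spectral theorem applied to the PSD matrix $\mathbf{Q}$ gives $\mathbf{Q} = \mathbf{U}\mathbf{\Lambda}\mathbf{U}^T$ with $\mathbf{U}$ orthogonal and $\mathbf{\Lambda}$ diagonal with nonnegative entries; setting $\mathbf{S} = \mathbf{U}\mathbf{\Lambda}^{1/2}$ yields $\mathbf{S}\mathbf{S}^T = \mathbf{Q}$. Note that $\mathbf{S}$ is only unique up to right-multiplication by an orthogonal matrix, which induces a rigid motion of the rows of $\mathbf{S}$ and therefore leaves the simplex property of their configuration invariant.

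For the biconditional, the essential reduction is $\mathrm{rank}(\mathbf{Q}) = \mathrm{rank}(\mathbf{S}\mathbf{S}^T) = \mathrm{rank}(\mathbf{S})$, so the claim becomes: the $k$ rows of $\mathbf{S}$ are affinely independent (and hence form the vertices of a $(k-1)$-simplex) if and only if $\mathrm{rank}(\mathbf{S}) = k - 1$. Written out algebraically, affine independence of the rows is equivalent to the condition $\ker(\mathbf{S}^T) \cap \mathbf{1}^\perp = \{0\}$, where $\mathbf{1}$ is the all-ones vector; that is, no nonzero $\mathbf{c}$ satisfies both $\mathbf{c}^T \mathbf{S} = 0$ and $\mathbf{1}^T \mathbf{c} = 0$. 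In the Fiedler correspondence underlying this result, one implicitly selects the factorization whose row sums vanish ($\mathbf{S}^T \mathbf{1} = 0$), so that $\mathbf{1} \in \ker(\mathbf{S}^T)$ automatically. Under that anchoring, the intersection condition above collapses to $\dim \ker(\mathbf{S}^T) = 1$, which is exactly $\mathrm{rank}(\mathbf{S}) = k - 1$, closing both directions of the equivalence simultaneously.

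The hard part is precisely this last identification: rank $k-1$ and affine independence of $k$ points in $\mathbb{R}^k$ are not unconditionally equivalent. For instance, the rows of $\mathbf{I}_k$ are affinely independent vertices of the standard simplex in the hyperplane $\sum_i x_i = 1$, yet the matrix has full rank $k$. What rescues the biconditional is the Fiedler-type placement that anchors the simplex's centroid at the origin via $\mathbf{S}^T \mathbf{1} = 0$, converting the affine independence of the rows into a purely linear-algebraic statement about $\ker(\mathbf{S}^T)$. Thus the crux of the proof is not the routine spectral factorization but rather the geometric choice of representative in the orbit $\{\mathbf{S}\mathbf{R} : \mathbf{R}^T \mathbf{R} = \mathbf{I}\}$, which is the structural content imported from \citep{fiedler,simplex}.
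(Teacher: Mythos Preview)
The paper does not supply its own proof of this theorem; it defers entirely to \citep{simplex,fiedler}. So there is no in-paper argument to compare against, and your proposal has to stand on its own.

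You have correctly located the real difficulty: the biconditional, read literally for an arbitrary PSD matrix $\mathbf{Q}$, is false. Your $\mathbf{I}_k$ example breaks the ``only if'' direction, and the ``if'' direction also fails in general (take any rank-$(k-1)$ PSD $\mathbf{Q}$ whose one-dimensional kernel is spanned by a vector orthogonal to $\mathbf{1}$; then two rows of $\mathbf{S}$ become affinely dependent).

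Where your argument breaks is the proposed repair. You claim that the centroid condition $\mathbf{S}^T\mathbf{1}=0$ can be arranged as ``the geometric choice of representative in the orbit $\{\mathbf{S}\mathbf{R}:\mathbf{R}^T\mathbf{R}=\mathbf{I}\}$.'' It cannot. For any orthogonal $\mathbf{R}$ one has $\ker\big((\mathbf{S}\mathbf{R})^T\big)=\ker(\mathbf{R}^T\mathbf{S}^T)=\ker(\mathbf{S}^T)$, since $\mathbf{R}^T$ is invertible; and $\ker(\mathbf{S}^T)=\ker(\mathbf{S}\mathbf{S}^T)=\ker(\mathbf{Q})$. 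Hence $\mathbf{1}\in\ker(\mathbf{S}^T)$ is equivalent to $\mathbf{Q}\mathbf{1}=0$, a property of $\mathbf{Q}$ itself, not a freedom in the factorization. Right-multiplying by an orthogonal matrix rigidly rotates the row configuration inside $\mathbb{R}^k$; it does not move the centroid.

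Your kernel argument is in fact the correct one, but it needs $\mathbf{Q}\mathbf{1}=0$ as a \emph{hypothesis} (which is exactly the Laplacian setting of Fiedler and of Corollary~\ref{cor:lapl}), not as a gauge choice. Under that hypothesis, $\ker(\mathbf{Q})=\mathrm{span}(\mathbf{1})$ is equivalent to $\mathrm{rank}(\mathbf{Q})=k-1$, and then $\ker(\mathbf{S}^T)\cap\mathbf{1}^\perp=\{0\}$ gives affine independence exactly as you wrote. So the gap you detected in the theorem's wording is real; it is closed by restricting to Laplacian-type $\mathbf{Q}$, not by an orbit argument.
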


\begin{corollary}\label{cor:lapl}
Let $G$ be a connected graph with $n$ nodes. Its Laplacian matrix $\mathbf{L}$ is positive semidefinite, has rank $n-1$, and has eigendecomposition $\mathbf{L} = \mathbf{P} \mathbf{\Lambda} \mathbf{P^T}$. Write $\mathbf{S} = \mathbf{P} \sqrt{\mathbf{\Lambda}}$. Then, $\mathbf{L} = \mathbf{S} \mathbf{S^T}$ and the rows of $\mathbf{S}$ are the vertices of a $(n-1)$-dimensional simplex called the \emph{simplex of $G$}. \qed
\end{corollary}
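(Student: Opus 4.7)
The plan is to split Corollary~\ref{cor:lapl} into its three algebraic assertions about $\mathbf{L}$ --- positive semidefiniteness, rank $n-1$, and the factorization $\mathbf{L} = \mathbf{S}\mathbf{S^T}$ --- and then invoke Theorem~\ref{thm:simplex} to harvest the simplex structure. Each algebraic claim is standard; the task is mainly assembling the pieces in the right order so that Theorem~\ref{thm:simplex} applies cleanly.

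For positive semidefiniteness I would appeal directly to the quadratic form in Equation~(\ref{eqn:quadraticform}): since the edge weights $a_{ij}$ are nonnegative, $y^T \mathbf{L} y$ is a sum of nonnegative terms for every $y \in \mathbb{R}^n$. This immediately gives that $\mathbf{L}$ is positive semidefinite, so all its eigenvalues are nonnegative, which in turn legitimizes the diagonal square root $\sqrt{\mathbf{\Lambda}}$ used to define $\mathbf{S}$.

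For the rank I would argue in two directions. The bound $\mathrm{rank}(\mathbf{L}) \le n - 1$ follows from the row-sum property $\mathbf{L}\mathbf{1} = 0$, which is immediate from $\mathbf{L} = \mathbf{D} - \mathbf{A}$. For the reverse bound I would use Equation~(\ref{eqn:quadraticform}) again: any $y$ in the kernel of $\mathbf{L}$ forces $y_i = y_j$ whenever $a_{ij} > 0$, and connectivity of $G$ propagates this equality across all pairs of nodes, so the kernel is spanned by $\mathbf{1}$ and $\mathrm{rank}(\mathbf{L}) = n - 1$ exactly.

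With these ingredients in place, the spectral theorem for real symmetric matrices supplies the orthogonal eigendecomposition $\mathbf{L} = \mathbf{P}\mathbf{\Lambda}\mathbf{P^T}$, and a one-line computation using that $\sqrt{\mathbf{\Lambda}}$ is diagonal yields $\mathbf{S}\mathbf{S^T} = \mathbf{P}\sqrt{\mathbf{\Lambda}}\sqrt{\mathbf{\Lambda}}\mathbf{P^T} = \mathbf{P}\mathbf{\Lambda}\mathbf{P^T} = \mathbf{L}$. The final step is to apply Theorem~\ref{thm:simplex} with $\mathbf{Q} = \mathbf{L}$ and $k = n$: its hypotheses match exactly since $\mathrm{rank}(\mathbf{L}) = n - 1 = k - 1$, so its conclusion delivers that the $n$ rows of $\mathbf{S}$ are the vertices of an $(n-1)$-dimensional simplex. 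No step presents a real obstacle; the only point deserving care is the rank argument, where the connectivity hypothesis on $G$ is precisely what rules out additional kernel vectors and pins the rank at $n-1$ rather than below.
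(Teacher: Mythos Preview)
Your proposal is correct and is precisely the argument the paper has in mind: the paper states Corollary~\ref{cor:lapl} without proof (marking it \qed and deferring to \cite{simplex,fiedler}), treating it as an immediate consequence of Theorem~\ref{thm:simplex} once the standard facts about $\mathbf{L}$ are granted. Your write-up simply fills in those standard facts --- positive semidefiniteness from Equation~(\ref{eqn:quadraticform}), rank $n-1$ from connectivity, and the factorization from the spectral theorem --- and then invokes Theorem~\ref{thm:simplex} exactly as the paper intends.
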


Corollary \ref{cor:lapl} is central to the approach in \citep{simplex}, providing a correspondence between graphs and simplices. Corollary~\ref{cor:lapl} also shines a new light on \name{}: \textbf{the matrix $\mathbf{S^d}$ from Definition \ref{def:glee} corresponds to the first $d$ dimensions of the simplex of $G$.} In other words, computing the \name{} embeddings of a graph $G$ is equivalent to computing the simplex of $G$ and projecting it down to $d$ dimensions. We proceed to explore the geometric properties of this simplex that can aid in the interpretation of \name{} embeddings. We can find in \citep{simplex} the following result.

\begin{corollary}\label{cor:angles}
Let $s_i$ be the $i$th row of $\mathbf{S}$ in Corollary~\ref{cor:lapl}. $s_i$ is the simplex vertex corresponding to node $i$, and satisfies $\|s_i\|^2 = \deg(i)$, and $s_i \cdot s_j^T = -a_{ij}$, where $\deg(i)$ is the degree of $i$. In particular, $s_i$ is orthogonal to the embedding of any non-neighboring node $j$. \qed
\end{corollary}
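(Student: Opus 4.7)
The plan is to read off the claim directly from the factorization $\mathbf{L}=\mathbf{S}\mathbf{S^T}$ provided by Corollary~\ref{cor:lapl}, together with the definition of the Laplacian. The key observation is that the entries of a Gram matrix are exactly the pairwise inner products of the rows being Gram-factored: if $s_i$ denotes the $i$th row of $\mathbf{S}$, then the $(i,j)$ entry of $\mathbf{S}\mathbf{S^T}$ is precisely $s_i \cdot s_j^T$. So the two claimed identities are really just a rewriting of $L_{ii}$ and $L_{ij}$ in geometric language.

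Concretely, I would proceed in two short steps. First, recall from Section~\ref{sec:background} that $\mathbf{L}=\mathbf{D}-\mathbf{A}$, so that the diagonal entries satisfy $L_{ii}=\deg(i)$ and the off-diagonal entries satisfy $L_{ij}=-a_{ij}$ for $i\neq j$. Second, since $\mathbf{L}=\mathbf{S}\mathbf{S^T}$, comparing entry $(i,j)$ on both sides yields
\begin{equation*}
s_i \cdot s_j^T \;=\; L_{ij} \;=\; \begin{cases}\deg(i), & i=j,\\ -a_{ij}, & i\neq j.\end{cases}
\end{equation*}
Setting $i=j$ gives $\|s_i\|^2=\deg(i)$, and the off-diagonal case is exactly the second identity. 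The orthogonality statement is then immediate: if $i$ and $j$ are non-adjacent and distinct, $a_{ij}=0$, hence $s_i\cdot s_j^T=0$.

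There is essentially no obstacle here; the result is a direct consequence of interpreting the Gram factorization $\mathbf{L}=\mathbf{S}\mathbf{S^T}$ entrywise. The only thing worth being slightly careful about is that Corollary~\ref{cor:lapl} is stated for connected graphs (so that $\mathrm{rank}(\mathbf{L})=n-1$ and the simplex interpretation holds), but the entrywise identities $\|s_i\|^2=\deg(i)$ and $s_i\cdot s_j^T=-a_{ij}$ in fact hold for any Gram factorization of $\mathbf{L}$, connected or not. No Lagrange multipliers, spectral estimates, or geometric arguments beyond the definition of a Gram decomposition are needed.
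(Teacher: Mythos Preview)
Your argument is correct and is exactly the natural one: read off the entries of $\mathbf{L}=\mathbf{S}\mathbf{S^T}$ as inner products of rows, then match against $\mathbf{L}=\mathbf{D}-\mathbf{A}$. The paper itself does not supply a proof for this corollary; it simply states the result with a \qed{} and attributes it to \citep{simplex}. So there is nothing to compare against, and your Gram-matrix reading is precisely the standard justification one would expect (and presumably what appears in the cited reference).
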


Corollary \ref{cor:angles} highlights some of the basic geometric properties of the simplex (such as lengths and dot products) that can be interpreted in graph theoretical terms (resp., degrees and adjacency). In Figure~\ref{fig:simplex} we show examples of these properties. It is worth noting that other common matrix representations of graphs do not present a spectral decomposition that yields a simplex. For example, the adjacency matrix $\mathbf{A}$ is not in general positive semidefinite, and the normalized Laplacian $\mathbf{D^{-1}}\mathbf{L}$ (used by LE) is not symmetric. Therefore, Theorem~\ref{thm:simplex} does not apply to them. We now proceed to show how to take advantage of the geometry of \name{} embeddings, which can all be thought of as coming from the simplex, in order to perform common graph mining tasks. In the following we focus on unweighted, undirected graphs.

\begin{figure*}[t]
\centering
\includegraphics[width=1\textwidth]{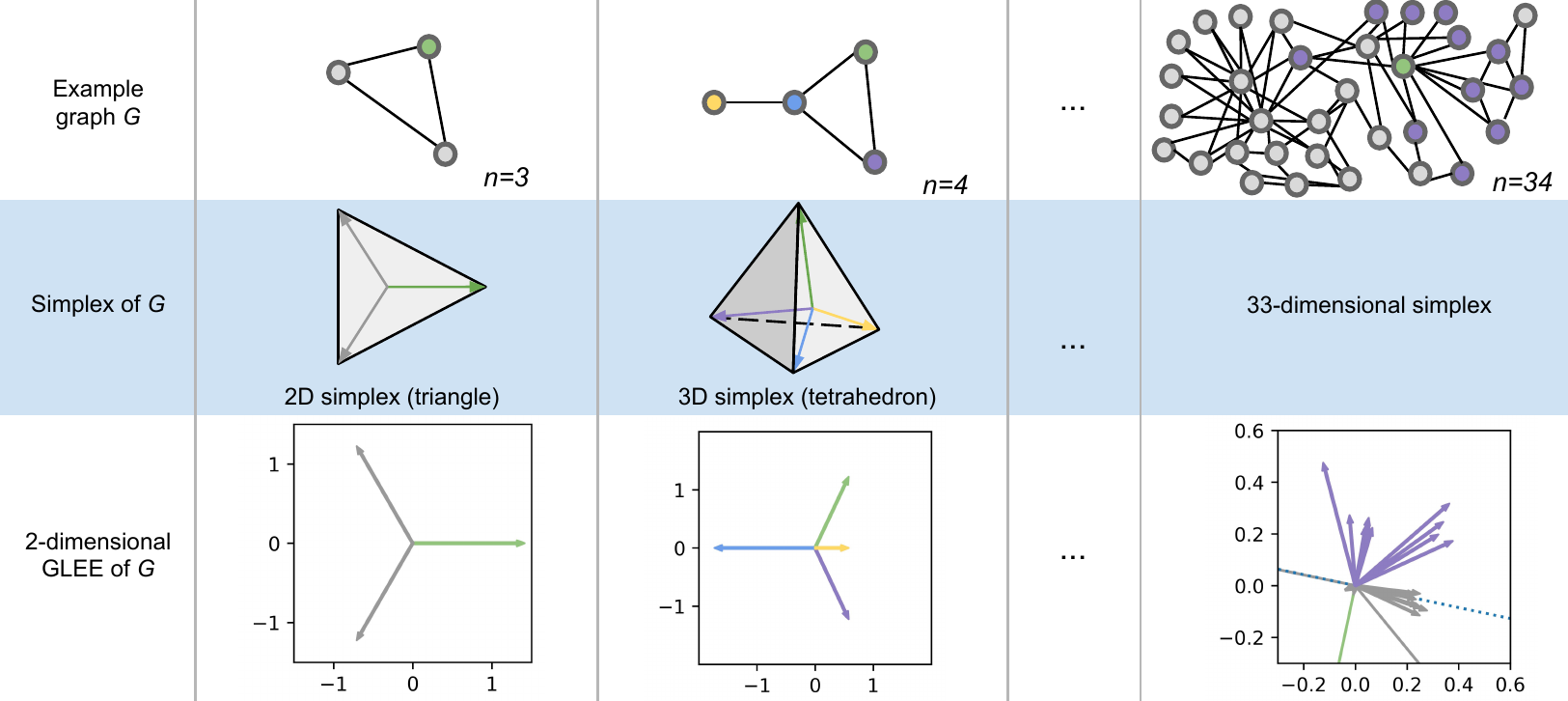}
\caption{\label{fig:simplex}
Simplex geometry and \name{}. Given a graph $G$ with $n$ nodes (top row), there is a $(n-1)$-dimensional simplex that perfectly encodes the structure of $G$, given by the rows of the matrix $S$ from Corollary~\ref{cor:lapl} (middle row). The first $d$ columns of $S$ yield the \longname{} (\name{}) of $G$ (bottom row). In each example, embeddings are color-coded according to the node they represent. For $n=3$, all nodes in the triangle graph are interchangeable. Accordingly, their embeddings all have the same length and subtend equal angles with each other. For $n=4$, the green and purple nodes are interchangeable, and thus their embeddings are symmetric. Note that the length of each embedding corresponds to the degree of the corresponding node. For $n=34$ we show the Karate Club network \citep{zachary}, in which we highlight one node in green and all of its neighbors in purple. In the bottom right panel, the dotted line is orthogonal to the green node's embedding. Note that most of the non-neighbors' embeddings (in gray) are close to orthogonal to the green node's embedding, while all neighbors (in purple) are not.}
\end{figure*}

\subsection{Graph Reconstruction}\label{sub:reconstruction}
For a graph $G$ with $n$ nodes, consider its $d$-dimensional \name{} embedding $\mathbf{S^d}$. When $d=n$, in light of Corollary~\ref{cor:angles}, the dot product between any two embeddings $s_i, s_j$ can only take the values $-1$ or $0$ and one can reconstruct the graph perfectly from its simplex. However, if $d<n$, the distribution of dot products will take on real values around $-1$ and $0$ with varying amounts of noise; the larger the dimension $d$, the less noise we find around the two modes. It is important to distinguish which nodes $i, j$ have embeddings $s_i, s_j$ whose dot product belongs to the mode at $0$ or to the mode at $-1$, for this determines whether or not the nodes are neighbors in the graph. One possibility is to simply ``split the difference'' and consider $i$ and $j$ as neighbors whenever $s_i \cdot s_j < -0.5$. More generally, given a graph $G$ and its embedding $\mathbf{S^d}$, define $\mathbf{\hat{L}(\theta)}$ to be the estimated Laplacian matrix using the above heuristic with threshold $\theta$, that is
\begin{equation}\label{eqn:L-estimate}
\mathbf{\hat{L}_{ij}(\theta)} = \left\{
\begin{array}{ll}
      -1 & s_i \cdot s_j^T < \theta \\
      0 & otherwise. \\
\end{array} 
\right.
\end{equation}
Then, we seek the value of $\theta$, call it $\theta_{\text{opt}}$, that minimizes the loss
\begin{equation}\label{eqn:Lhat-objective}
\theta_{\text{opt}} = \argmin_{\theta \in [-1, 0]} \|\mathbf{L} - \mathbf{\hat{L}}(\theta)\|_F^2.
\end{equation}
If all we have access to is the embedding, but not the original graph, we cannot optimize Equation~(\ref{eqn:Lhat-objective}) directly. Thus, we have to estimate $\theta_{\text{opt}}$ heuristically. As explained above, one simple estimator is the constant $\hat{\theta}_c = -0.5$. We develop two other estimators: $\hat{\theta}_k, \hat{\theta}_g$, obtained by applying Kernel Density Estimation and Gaussian Mixture Models, respectively. We do so in Appendix~\ref{app:estimators} as their development has little to do with the geometry of \name{} embeddings. Our experiments show that different thresholds $\theta_c$, $\theta_k$, and $\theta_g$ produce excellent results on different data sets; see Appendix~\ref{app:estimators} for discussion.

\subsection{Link Prediction}\label{sub:prediction}
Since the objective of \name{} is to directly encode graph structure in a geometric way, rather than solve any one particular task, we are able to use it in two different ways to perform link prediction. These are useful in different kinds of networks.

\subsubsection{Number of Common Neighbors}\label{subsub:common}
It is well known that heuristics such as number of common neighbors (CN) or Jacard similarity (JS) between neighborhoods  are highly effective for the task of link prediction in networks with a strong tendency for triadic closure \citep{SarkarCM11}. Here, we show that we can use the geometric properties of \name{} in order to approximately compute CN. For the purpose of exposition, we assume $d=n$ unless stated otherwise in this section.

Given an arbitrary subset of nodes $V$ in the graph $G$, we denote by $|V|$ its number of elements. We further define \emph{the centroid of $V$}, denoted by $C_V$, as the centroid of the simplex vertices that correspond to its nodes, i.e., $C_V = \frac{1}{|V|} \sum_{i \in V} s_i$. The following lemma, which can be found in \citep{simplex}, highlights the graph-theoretical interpretation of the geometric object $C_V$.

\begin{lemma}[From \citep{simplex}]\label{lem:centroid}
Given a graph $G$ and its \name{} embedding $S$, consider two disjoint node sets $V_1$ and $V_2$. Then, the number of edges with one endpoint in $V_1$ and one endpoint in $V_2$, is given by
\begin{equation}
    -|V_1| |V_2| \,\, C_{V_1}^T \cdot C_{V_2}
\end{equation}
\end{lemma}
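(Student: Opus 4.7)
The plan is to unfold the definition of the centroids and reduce everything to a double sum of pairwise dot products, which Corollary~\ref{cor:angles} already tells us how to evaluate.

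First, I would expand
\begin{equation*}
-|V_1|\,|V_2|\, C_{V_1}^T \cdot C_{V_2}
= -|V_1|\,|V_2| \left(\tfrac{1}{|V_1|}\sum_{i \in V_1} s_i\right)^T \cdot \left(\tfrac{1}{|V_2|}\sum_{j \in V_2} s_j\right)
= -\sum_{i \in V_1}\sum_{j \in V_2} s_i \cdot s_j^T,
\end{equation*}
by bilinearity of the inner product and the definition $C_V = \frac{1}{|V|}\sum_{i\in V} s_i$.

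Next, I would invoke Corollary~\ref{cor:angles}, which states that $s_i \cdot s_j^T = -a_{ij}$ whenever $i \neq j$. Here the disjointness hypothesis $V_1 \cap V_2 = \emptyset$ is crucial: it guarantees that for every pair $(i,j) \in V_1 \times V_2$ we have $i \neq j$, so we never hit the diagonal case $\|s_i\|^2 = \deg(i)$ and can apply the off-diagonal formula uniformly. Substituting gives
\begin{equation*}
-\sum_{i \in V_1}\sum_{j \in V_2} s_i \cdot s_j^T \;=\; \sum_{i \in V_1}\sum_{j \in V_2} a_{ij}.
\end{equation*}
Finally, since $G$ is unweighted and undirected and $V_1, V_2$ are disjoint, the double sum $\sum_{i \in V_1, j \in V_2} a_{ij}$ counts exactly once each edge with one endpoint in $V_1$ and the other in $V_2$, which is the claimed quantity.

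I do not expect any real obstacle here: the only subtlety is tracking the disjointness to avoid the $i=j$ case of Corollary~\ref{cor:angles}. If one wished to generalize to overlapping $V_1, V_2$ or to weighted graphs, one would have to add a correction term $\sum_{i \in V_1 \cap V_2} \deg(i)$ coming from the diagonal; but under the stated hypotheses no such correction is needed and the proof is a two-line computation.
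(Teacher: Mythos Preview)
Your proposal is correct and follows essentially the same argument as the paper: expand the centroids by bilinearity, apply Corollary~\ref{cor:angles} to replace each $s_i\cdot s_j^T$ by $-a_{ij}$, and recognize the resulting double sum as the edge count. Your version is in fact slightly more careful than the paper's, since you explicitly note that disjointness of $V_1$ and $V_2$ is what lets you avoid the diagonal case $\|s_i\|^2=\deg(i)$.
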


\begin{proof}
By linearity of the dot product, we have 
\begin{align}
|V_1||V_2| C_{V_1}^T \cdot C_{V_2} &= \sum_{i\in V_1} \sum_{j\in V_2} s_i \cdot s_j^T = - \sum_{i\in V_1} \sum_{j\in V_2} a_{ij}
\end{align}
The expression on the right is precisely the required quantity.
\end{proof}

Lemma~\ref{lem:centroid} says that we can use the dot product between the centroids of two node sets to count the number of edges that are shared by them. Thus, we now reformulate the problem of finding the number of common neighbors between two nodes in terms of centroids of node sets. In the following, we use $N(i)$ to denote the neighborhood of node $i$, that is, the set of nodes connected to it.

\begin{lemma}\label{lem:common}
Let $i,j \in V$ be non-neighbors. Then, the number of common neighbors of $i$ and $j$, denoted by $CN(i,j)$, is given by
\begin{equation}\label{eqn:cn}
    CN(i,j) = -\deg(i) \, C_{N(i)} \cdot s_j^T = -\deg(j) \,\, C_{N(j)} \cdot s_i^T
\end{equation}
\end{lemma}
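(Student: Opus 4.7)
The plan is to apply Lemma \ref{lem:centroid} directly with the carefully chosen sets $V_1 = N(i)$ and $V_2 = \{j\}$. First I would verify that the disjointness hypothesis of Lemma \ref{lem:centroid} is satisfied: since $i$ and $j$ are assumed to be non-neighbors, $j \notin N(i)$, so $N(i) \cap \{j\} = \emptyset$.

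Next, I would unpack the two sides of the equation from Lemma \ref{lem:centroid}. On the geometric side, the centroid of a singleton set is just the simplex vertex itself, so $C_{\{j\}} = s_j$, and $|V_1||V_2| = |N(i)| \cdot 1 = \deg(i)$. Thus the right-hand side of Lemma \ref{lem:centroid} becomes $-\deg(i)\, C_{N(i)} \cdot s_j^T$. On the combinatorial side, the number of edges with one endpoint in $N(i)$ and the other endpoint equal to $j$ is exactly the number of neighbors of $j$ that also lie in $N(i)$, i.e., $|N(i) \cap N(j)|$, which is the definition of $CN(i,j)$. Equating the two expressions yields the first equality in \eqref{eqn:cn}.

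For the second equality, I would simply exchange the roles of $i$ and $j$ (their hypotheses are symmetric) and repeat the argument with $V_1 = N(j)$ and $V_2 = \{i\}$, obtaining $CN(j,i) = -\deg(j)\, C_{N(j)} \cdot s_i^T$. Since $CN(i,j) = CN(j,i)$, both claimed identities follow.

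There is no real obstacle here: the lemma is essentially a corollary of Lemma \ref{lem:centroid} applied to a carefully chosen pair of node sets, and the only hypothesis to check is the disjointness of $N(i)$ and $\{j\}$, which is immediate from the assumption that $i$ and $j$ are non-neighbors. (It is worth noting that the non-neighbor hypothesis is also what ensures $CN(i,j)$ literally equals the edge count between $N(i)$ and $\{j\}$; if $i \sim j$ held, the edge $(i,j)$ itself would be counted but $i$ is not a common neighbor of itself and $j$.)
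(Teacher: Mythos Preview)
Your proof is correct and follows exactly the paper's approach: apply Lemma~\ref{lem:centroid} with $V_1 = N(i)$, $V_2 = \{j\}$ (and symmetrically with $i$ and $j$ swapped). You have simply spelled out the details that the paper leaves implicit, including the verification of disjointness and the identification of the edge count with $CN(i,j)$.
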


\begin{proof}
Apply Lemma~\ref{lem:centroid} to the node sets $V_1=N(i)$ and $V_2=\{j\}$, or, equivalently, to $V_1=N(j)$ and $V_2=\{i\}$.
\end{proof}

Now assume we have the $d$-dimensional \name{} of $G$. We approximate $CN(i,j)$ by estimating both $\deg(i)$ and $C_{N(j)}$. First, we know from Corollary~\ref{cor:angles} that $\deg(i) \approx \|s_i^d\|^2$. Second, we define the approximate neighbor set of $i$ as $\hat{N}(i) = \{k: s_k^d \cdot (s_i^d)^T < \hat{\theta}\}$, where $\hat{\theta}$ is any of the estimators from Section~\ref{sub:reconstruction}. We can now write
\begin{equation}
    CN(i,j) \approx -\|s_i^d\|^2 C_{\hat{N}(i)} \cdot (s_j^d)^T
\end{equation}
The higher the value of this expression, the more confident is our prediction that the link $(i,j)$ exists.

\subsubsection{Number of Paths of Length 3}\label{subsub:l3}
A common critique of the original Laplacian Eigenmaps algorithm is that it only takes into account first order connections, which were considered in Section~\ref{subsub:common}. Furthermore, \citet{Kovacs} point out that the application of link prediction heuristics CN and JS does not have a solid theoretical grounding for certain types of biological networks such as protein-protein interaction networks. They further propose to use the (normalized) number of paths of length three (L3) between two nodes to perform link prediction. We next present a way to approximate L3 using \name{}. This achieves good performance in those networks where CN and JS are invalid, and show that \name{} can take into account higher-order connectivity of the graph.

\begin{lemma}\label{lem:l3}
Assume $S$ is the \name{} of a graph $G$ of dimension $d=n$. Then, the number of paths of length three between two distinct nodes $i$ and $j$ is 
\begin{equation}\label{eqn:l3}
    L3(i,j) = - \deg(i)\deg(j) \, C_{N(i)} \cdot C_{N(j)}^T + \sum_{k\in N(i)\cap N(j)} \|s_k\|^2
\end{equation}
\end{lemma}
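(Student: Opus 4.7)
The plan is to perform a direct algebraic expansion in the spirit of Lemma~\ref{lem:centroid}, carefully tracking the contributions of nodes where $N(i)$ and $N(j)$ overlap. The correction term $\sum_{k\in N(i)\cap N(j)}\|s_k\|^2$ appearing in the statement will turn out to be exactly what is needed to compensate for the fact that Lemma~\ref{lem:centroid}, as stated, assumed $V_1$ and $V_2$ disjoint, whereas here $N(i)\cap N(j)$ need not be empty.

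First I would use $|N(i)|=\deg(i)$, $|N(j)|=\deg(j)$, and linearity of the dot product to write
\[
-\deg(i)\deg(j)\, C_{N(i)}\cdot C_{N(j)}^T \;=\; -\sum_{k\in N(i)}\sum_{\ell\in N(j)} s_k\cdot s_\ell^T.
\]
I would then split this double sum based on whether $k=\ell$ or $k\neq\ell$. By Corollary~\ref{cor:angles}, $s_k\cdot s_\ell^T = -a_{k\ell}$ when $k\neq\ell$, and $s_k\cdot s_k^T = \|s_k\|^2$ when $k=\ell$; the diagonal contribution arises exactly on $N(i)\cap N(j)$. So the expression becomes
\[
\sum_{\substack{k\in N(i),\,\ell\in N(j)\\ k\neq\ell}} a_{k\ell} \;-\; \sum_{k\in N(i)\cap N(j)} \|s_k\|^2.
\]
Adding the correction term from the statement cancels the diagonal contribution and leaves $\sum_{k\in N(i),\,\ell\in N(j),\,k\neq\ell} a_{k\ell}$.

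To finish I would identify this remaining sum with $L3(i,j)$: it counts ordered pairs $(k,\ell)$ with $k\sim i$, $\ell\sim j$, $k\sim\ell$, and $k\neq\ell$, which is precisely the number of length-three walks from $i$ to $j$, i.e., $(\mathbf{A}^3)_{ij}$. For non-adjacent $i$ and $j$ (the case of interest for link prediction) such walks are automatically simple paths, since $k=j$ would force $j\in N(i)$ and $\ell=i$ would force $i\in N(j)$.

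The only real subtlety is the diagonal correction: when $N(i)$ and $N(j)$ share neighbors, naive application of the centroid identity miscounts the contribution of those shared neighbors, and this is what the $\|s_k\|^2$ correction is there to repair. Beyond this bookkeeping, no machinery beyond Corollary~\ref{cor:angles} is required, so I do not expect any deeper obstacle.
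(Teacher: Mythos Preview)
Your proposal is correct and follows essentially the same computation as the paper: expand the centroid dot product as a double sum over $N(i)\times N(j)$, separate the diagonal $k=\ell$ terms (which live on $N(i)\cap N(j)$ and contribute $\|s_k\|^2$) from the off-diagonal ones (which give $a_{k\ell}$ via Corollary~\ref{cor:angles}), and identify the off-diagonal sum with $(\mathbf{A}^3)_{ij}$. The only difference is cosmetic: the paper starts from $(\mathbf{A}^3)_{ij}$ and works toward the centroid expression, while you start from the right-hand side of~(\ref{eqn:l3}) and work back; your added remark distinguishing walks from simple paths for non-adjacent $i,j$ is a clarification the paper leaves implicit.
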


\begin{proof}
The number of paths of length three between $i$ and $j$ is $(\mathbf{A^3})_{ij}$, where $\mathbf{A}$ is the adjacency matrix of $G$. We have
\begin{align}
    (\mathbf{A^3})_{ij} &= \sum_{k \in N(i)} \sum\limits_{\substack{l \in N(j)\\ l\neq k}} a_{kl} = -\sum_{k \in N(i)} \sum\limits_{\substack{l \in N(j)\\ l\neq k}} s_k \cdot s_l^T \\
    &= - \sum_{k \in N(i)} \sum_{l \in N(j)} s_k \cdot s_l^T + \sum_{k \in N(i) \cap N(j)} s_k \cdot s_k^T \\
    &= -|N(i)||N(j)| C_{N(i)} \cdot C_{N(j)}^T + \sum_{k \in N(i) \cap N(j)} \|s_k\|^2,
\end{align}
where the last expression follows by the linearity of the dot product, and  is equivalent to (\ref{eqn:l3}).
\end{proof}

When $d<n$, we can estimate $\deg(i)$ by $\|s_i^d\|^2$ and $N(i)$ by $\hat{N}(i)$ as before, with the help of an estimator $\hat{\theta}$ from Section~\ref{sub:reconstruction}.

\subsection{Runtime analysis}\label{sec:bigoh}
On a graph $G$ with $n$ nodes, finding the $k$ largest eigenvalues and eigenvectors of the Laplacian takes $O(kn^2)$ time, if one uses algorithms for fast approximate singular value decomposition \citep{trefethen1997numerical,halko2011finding}. Given a  $k$-dimensional embedding matrix $S$, reconstructing the graph is as fast as computing the product $S \cdot S^T$ and applying the threshold $\theta$ to each entry, thus it takes $O(n^\omega + n^2)$, where $\omega$ is the exponent of matrix multiplication. Approximating the number of common neighbors between nodes $i$ and $j$ depends only on the dot products between embeddings corresponding to their neighbors, thus it takes $O(k \times \min(\deg(i), \deg(j)))$, while approximating the number of paths of length 3 takes $O(k \times \deg(i) \times \deg(j))$.

\section{Related Work}\label{sec:related}
Spectral analyses of the Laplacian matrix have multiple applications in graph theory, network science, and graph mining \citep{newmanbook,spielman,vanmieghembook}. Indeed, the eigendecomposition of the Laplacian has been used for sparsification \citep{spielman2011graph}, clustering \citep{luxburg07}, dynamics \citep{vanmieghem2014,prakash2014}, robustness \citep{jamakovic,shahrivar}, etc. We here discuss those applications that are related to the general topic of this work, namely, dimensionality reduction of graphs.

One popular application is the use of Laplacian eigenvectors for graph drawing \citep{pisanskiS00,koren}, which can be thought of as graph embedding for the specific objective of visualization. In \citep{pisanskiS00} one such method is outlined, which, similarly to \name{}, assigns a vector, or higher-dimensional position, to each node in a graph using the eigenvectors of its Laplacian matrix, in such a way that the resulting vectors have certain desirable geometric properties. However, in the case of \citep{pisanskiS00}, those geometric properties are externally enforced as constraints in an optimization problem, whereas \name{} uses the intrinsic geometry already present in a particular decomposition of the Laplacian. Furthermore, their method focuses on the eigenvectors corresponding to the smallest eigenvalues of the Laplacian, while \name{} uses those corresponding to the largest eigenvalues, i.e. to the best approximation to the Laplacian through singular value decomposition.

On another front, many graph embedding algorithms have been proposed, see for example \citep{GoyalF18,HamiltonYL17} for extensive reviews. Most of these methods fall in one of the following categories: matrix factorization, random walks, or deep architectures. Of special importance to us are methods that rely on matrix factorization. Among many advantages, we have at our disposal the full toolbox of spectral linear algebra to study them \citep{LevinRMP18,charisopoulos19,ChenT15,ChenT17}. Examples in this category are the aforementioned Laplacian Eigenmaps (LE) \citep{BelkinN01,BelkinN03} and Graph Factorization (GF) \citep{AhmedSNJS13}. One important difference between \name{} and LE is that LE uses the small eigenvalues of the normalized Laplacian $\mathbf{D^{-1}} \mathbf{L}$, while \name{} uses the large eigenvalues of $\mathbf{L}$. Furthermore, LE does not present the rich geometry of the simplex. Graph Factorization (GF) finds a decomposition of the weighted adjacency matrix $\mathbf{W}$ with a regularization term. Their objective is to find embeddings $\{s_i\}$ such that $s_i \cdot s_j = a_{ij}$, whereas in our case we try to reconstruct $s_i \cdot s_j = \mathbf{L_{ij}}$. This means that the embeddings found by Graph Factorization will present different geometric properties. There are many other methods of dimensionality reduction on graphs that depend on matrix factorization \citep{KuangPD12,cai2010graph,WangCWP0Y17}. However, even if some parameterization, or special case, of any of these methods results in a method resembling the singular value decomposition of the Laplacian (thus imitating \name{}), to the authors' knowledge none of these methods make direct use of its intrinsic geometry.  

Among the methods based on random walks we find DeepWalk \citep{PerozziAS14} and node2vec \citep{GroverL16}, both of which adapt the framework of word embeddings \citep{mikolov} to graphs by using random walks and optimize a shallow architecture. It is also worth mentioning NetMF \citep{QiuDMLWT18} which unifies several methods in a single algorithm that depends on matrix factorization and thus unifies the two previous categories.

Among the methods using deep architectures, we have the deep autoencoder Structural Deep Network Embedding (SDNE) \citep{WangC016}. It penalizes representations of similar nodes that are far from each other using the same objective as LE. Thus, SDNE is also based on the distance-minimization approach. There is also \citep{CaoLX16} which obtains a non-linear mapping between the probabilistic mutual information matrix (PMI) of a sampled network and the embedding space. This is akin to applying the distance-minimization assumption not to the graph directly but to the PMI matrix.

Others have used geometric approaches to embedding. For example, \citep{EstradaSP14} and \citep{pereda2018machine} find embeddings on the surface of a sphere, while \citep{papadopoulos2012} and \citep{NickelK18} use the hyperbolic plane. These methods are generally developed under the assumption that the embedding space is used to generate the network itself. They are therefore aimed at recovering the generating coordinates, and not, as in \name's case, at finding a general representation suitable for downstream tasks.

\section{Experiments}\label{sec:exp}
We put into practice the procedures detailed in Sections~\ref{sub:reconstruction} and~\ref{sub:prediction} to showcase \name's performance in the tasks of link prediction and graph reconstruction. Code to compute the \name{} embeddings of networks and related computations is publicly available at \citep{glee-code}. For our experiments, we use the following baselines: GF because it is a direct factorization of the adjacency matrix, node2vec because it is regarded as a reference point among those methods based on random walks, SDNE because it aims to recover the adjacency matrix of a graph (a task \name{} excels at), NetMF because it generalizes several other well-known techniques, and LE because it is the method that most directly resembles our own. In this way we cover all of the categories explained in Section~\ref{sec:related} and use either methods that resemble \name{} closely or methods that have been found to generalize other techniques. For node2vec and SDNE we use default parameters. For NetMF we use the spectral approximation with rank $256$. The data sets we use are outlined in Table~\ref{tab:data}. Beside comparing \name{} to the other algorithms, we are interested in how the graph's structure affects performance of each method. This is why we have chosen data sets have similar number of nodes and edges, but different values of average clustering coefficient. Accordingly, we report our results with respect to the average clustering coefficient of each data set and the number of dimensions of the embedding (the only parameter of \name{}). In Appendix~\ref{app:estimator-comparison} we compare the performance of each estimator explained in Section~\ref{sub:reconstruction}. In the following experiments we use $\hat{\theta}_k$ as our estimator for $\theta_{opt}$.

\begin{table}
    \centering
    \begin{tabular}{ccccc}
    \hline
    Name & $n$ & $m$ & $\bar{c}$ & Type\\
    \hline
    \texttt{PPI} \citep{rolland2014proteome} & $4,182$ & $13,343$ & $0.04$ & protein interaction \\
    \texttt{wiki-vote} \citep{LeskovecHK10} & $7,066$ & $100,736$ & $0.14$ & endorsement \\
    \texttt{caida} \citep{LeskovecHK10} & $26,475$ & $53,381$ & $0.21$ & AS Internet \\
    \texttt{CA-HepTh} \citep{LeskovecKF07} & $8,638$ & $24,806$ & $0.48$ & co-authorship \\
    \texttt{CA-GrQc} \citep{LeskovecKF07} & $4,158$ & $13,422$ & $0.56$ & co-authorship \\
    \hline
    \end{tabular}
    \caption{Data sets used in this work (all undirected, unweighted): number of nodes $n$, number of edges $m$, and average clustering coefficient $\bar{c}$ of the largest connected component of each network. AS stands for autonomous systems of the Internet.}
    \label{tab:data}
\end{table}

\subsection{Graph Reconstruction}\label{sub:reconstruction_exp}
\textbf{Given a GLEE matrix $S^d$, how well can we reconstruct the original graph?} This is the task of graph reconstruction. We use as performance metric the \emph{precision at $k$} measure, defined as the precision of the first $k$ reconstructed edges. Note that \textit{precision at k} must always decrease when $k$ grows large, as there will be few correct edges left to reconstruct.

\begin{figure*}[t]
\centering
\includegraphics[width=0.99\textwidth]{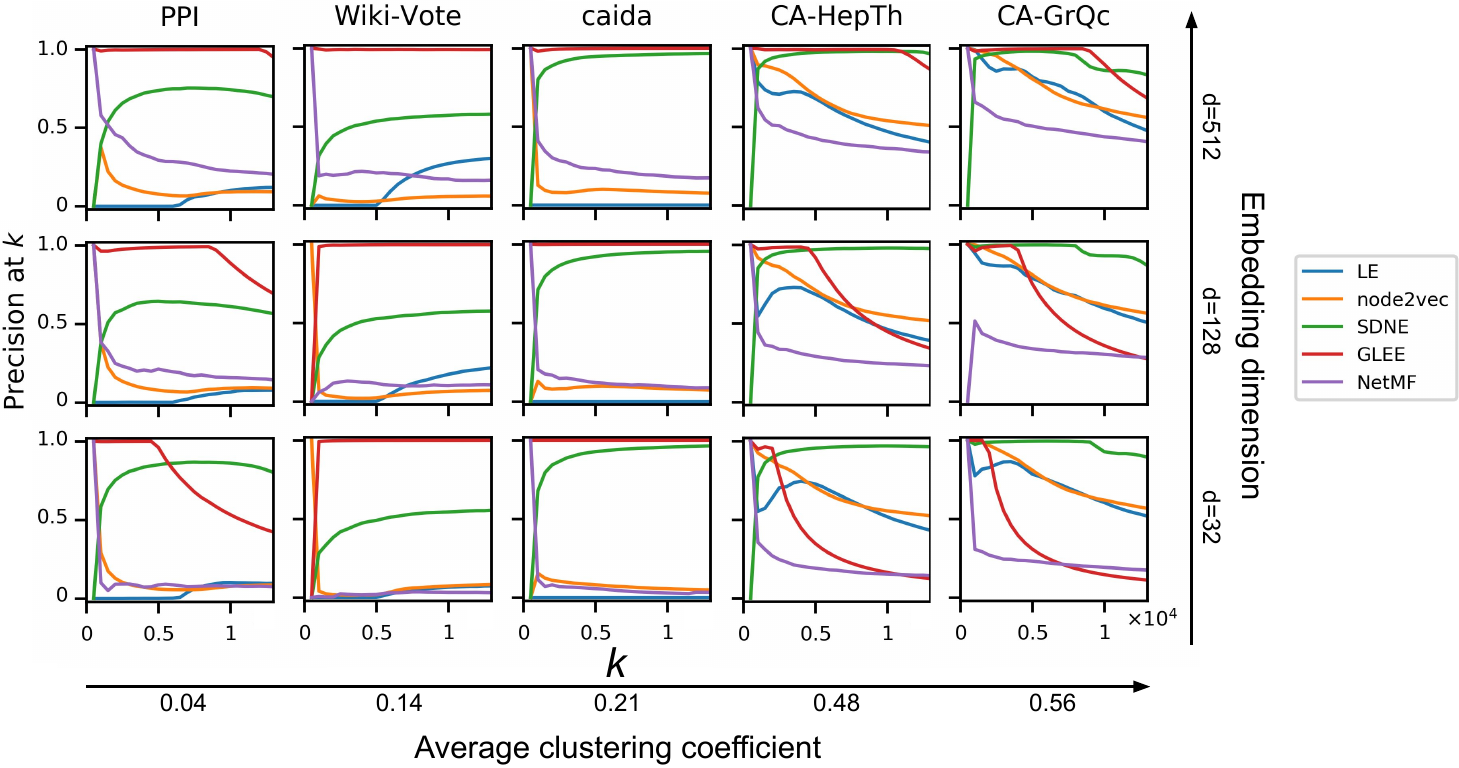}
\caption{\label{fig:reconstruction}
Graph reconstruction. \name{} performs best in networks with low clustering coefficient, presumably because it depends on the large eigenvalues of the Laplacian, which encode micro-level graph structure.}
\end{figure*}

Following Section~\ref{sub:reconstruction}, we reconstruct the edge $(i,j)$ if $s_i^d \cdot s_j^d < \hat{\theta}$. The further the dot product is from $0$ (the ideal value for non-edges), the more confident we are in the existence of this edge. For LE, we reconstruct the edge $(i,j)$ according to how small the distance between their embeddings is. For both GF, node2vec and NetMF, we reconstruct edges based on how high their dot product is. SDNE is a deep autoencoder and thus its very architecture involves a mechanism to reconstruct the adjacency matrix of the input graph.

We show results in Figure~\ref{fig:reconstruction}, where we have ordered data sets from left to right in ascending order of clustering coefficient, and from bottom up in ascending order of embedding dimension. GF results omitted from this Figure as it scored close to $0$ for all values of $k$ and $d$. On \texttt{CA-GrQc}, for low embedding dimension $d=32$, SDNE performs best among all methods, followed by node2vec and LE. However, as $d$ increases, \name{} substantially outperforms all others, reaching an almost perfect precision score at the first 10,000 reconstructed edges. Interestingly, other methods do not substantially improve performance as $d$ increases. This analysis is also valid for \texttt{CA-HepTh}, another data set with high clustering coefficient. However, on \texttt{PPI}, our data set with lowest clustering coefficient, \name{} drastically outperforms all other methods for all values of $d$. Interestingly, LE and node2vec perform well compared to other methods in data sets with high clustering, but their performance drops to near zero on \texttt{PPI}. We hypothesize that this is due to the fact that LE and node2vec depend on the ``community-aware'' assumption, thereby assuming that two proteins in the same cluster would interact with each other. This is the exact point that \citep{Kovacs} refutes. On the other hand, \name{} directly encodes graph structure, making no assumptions about the original graph, and its performance depends more directly on the embedding dimension than on the clustering coefficient, or on any other assumption about graph structure. \name's performance on data sets \texttt{PPI}, \texttt{Wiki-Vote}, and \texttt{caida} point to the excellent potential of our method in the case of low clustering coefficient.

\subsection{Link Prediction}\label{sub:prediction_exp}
\textbf{Given the embedding of a large subgraph of some graph $G$, can we identify which edges are missing?} The experimental setup is as follows. Given a graph $G$ with $n$ nodes, node set $V$ and edge set $E_{obs}$, we randomly split its edges into train and test sets $E_{train}$ and $E_{test}$. We use $|E_{train}|=0.75n$, and we make sure that the subgraph induced by $E_{train}$, denoted by $G_{train}$, is connected and contains every node of $V$. We then proceed to compute the \name{} of $G_{train}$ and test on $E_{test}$. We report AUC metric for this task. We use both techniques described in Sections~\ref{subsub:common} and \ref{subsub:l3}, which we label \name{} and \name-L3 respectively

Figure~\ref{fig:prediction} shows that node2vec repeats the behavior seen in graph reconstruction of increased performance as clustering coefficient increases, though again it is fairly constant with respect to embedding dimension. This observation is also true for NetMF. On the high clustering data sets, LE and \name{} have comparable performance to each other. However, either \name{} or \name-L3 perform better than all others on the low clustering data sets \texttt{PPI}, \texttt{Wiki-Vote}, as expected. Also as expected, the performance of \name-L3 decreases as average clustering increases. Note that \name{} and LE generally improve performance when $d$ increases, whereas node2vec and SDNE do not improve. (GF and SDNE not shown in Figure~\ref{fig:prediction} for clarity. They scored close to $0.5$ and $0.6$ in all data sets independently of $d$.) The reason why none of the methods studied here perform better than $0.6$ AUC in the \texttt{caida} data set is an open question left for future research. We conclude that the hybrid approach of NetMF is ideal for high clustering coefficient, whereas \name{} is a viable option in the case of low clustering coefficient as evidenced by the results on \texttt{PPI}, \texttt{Wiki-Vote}, and \texttt{caida}.

\begin{figure*}[t]
\centering
\includegraphics[width=.85\textwidth]{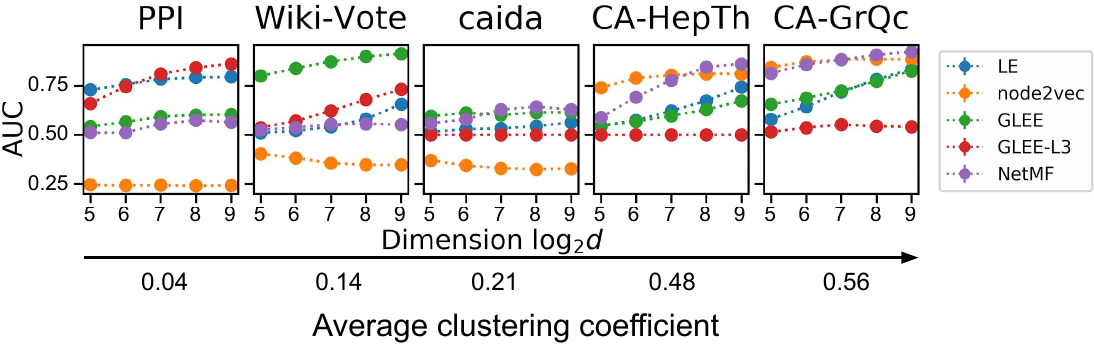}
\caption{\label{fig:prediction}
Link prediction results. We approximate the number of common neighbors (\name{}), or the number of paths of length~3 (\name-L3). Each circle is the average of $10$ realizations; error bars too small to show at this scale. GF and SDNE perform close to $0.5$ and $0.6$ independently of $d$ or data set (not shown). Data sets ordered from left to right in increasing order of clustering.}
\end{figure*}

\section{Conclusions}\label{sec:conclusions}
In this work we have presented the \longname{} (\name{}), a geometric approach to graph embedding that exploits the intrinsic geometry of the Laplacian. When compared to other methods, we find that \name{} performs the best when the underlying graph has low clustering coefficient, while still performing comparably to other state-of-the-art methods when the clustering coefficient is high. We hypothesize that this is due to the fact that the large eigenvalues of the Laplacian correspond to the small eigenvalues of the adjacency matrix and thus represent the structure of the graph at a micro level. Furthermore, we find that \name{}'s performance increases as the embedding dimension increases, something we do not see in other methods. In contrast to techniques based on neural networks, which have many hyperparameters and costly training phases, \name{} has only one parameter other than the embedding dimension, the threshold $\theta$, and we have provided three different ways of optimizing for it. Indeed, \name{} only depends on the SVD of the Laplacian matrix. 

We attribute these desirable properties of \name{} to the fact that it departs from the traditional literature of graph embedding by replacing the ``community aware'' notion (similar nodes' embeddings must be similar) with the notion of directly encoding graph structure using the geometry of the embedding space. In all, we find that GLEE is a promising alternative for graph embedding due to its simplicity in both theoretical background and computational implementation, especially in the case of low clustering coefficient. By taking a direct geometric encoding of graph structure using the simplex geometry, GLEE covers the gap left open by the ``community aware'' assumption of other embedding techniques, which requires high clustering. Future lines of work will explore what other geometric properties of the embedding space can yield interesting insight, as well as what are the important structural properties of graphs, such as clustering coefficient, that affect the performance of these methods.

\section*{Funding}
This work was supported by the National Science Foundation [IIS-1741197]; and by the Combat Capabilities Development Command Army Research Laboratory and was accomplished under Cooperative Agreement Number W911NF-13-2-0045 (U.S. Army Research Lab Cyber Security CRA). The views and conclusions contained in this document are those of the authors and should not be interpreted as representing the official policies, either expressed or implied, of the Combat Capabilities Development Command Army Research Laboratory or the U.S. Government. The U.S. Government is authorized to reproduce and distribute reprints for Government purposes not withstanding any copyright notation here on.

\appendix

\section{Threshold Estimators}\label{app:estimators}
We present two other estimators of $\theta_{opt}$ to accompany the heuristic $\hat{\theta}_c = -0.5$ mentioned in Section~\ref{sub:reconstruction}.

\subsection{Kernel Density Estimation}\label{subsub:kdd}
As can be seen in Figure \ref{fig:hist_glee}, the problem of finding a value of $\theta$ that sufficiently separates the peaks corresponding to edges (around the peak centered at $-1$) and non-edges (around the peak centered at $0$) can be stated in terms of density estimation. That is, given the histogram of values of $s_i \cdot s_j^T$ for all $i,j$, we can approximate the density of this empirical distribution by some density function $f_k$. A good heuristic estimator of $\theta_{\text{opt}}$ is the value that minimizes $f_k$ between the peaks near $-1$ and $0$.  For this purpose, we use Kernel Density Estimation over the distribution of $s_i \cdot s_j^T$ and a box kernel (a.k.a. "top hat" kernel) function to define
\begin{equation}\label{eqn:kde}
f_k(x) \propto \sum_{i<j}^n 1\{x - s_i \cdot s_j^T < h\},
\end{equation}
We then use gradient descent to find the minimal value of $f_k$ between the values of $-1$ and $0$. We call this value $\hat{\theta}_k$. We have found experimentally that a value of $h=0.3$ gives excellent results, achieving near zero error in the reconstruction task (Figure~\ref{fig:hist_glee}, middle row).

\begin{figure}[ht]
\centering
\includegraphics[width=0.99\columnwidth]{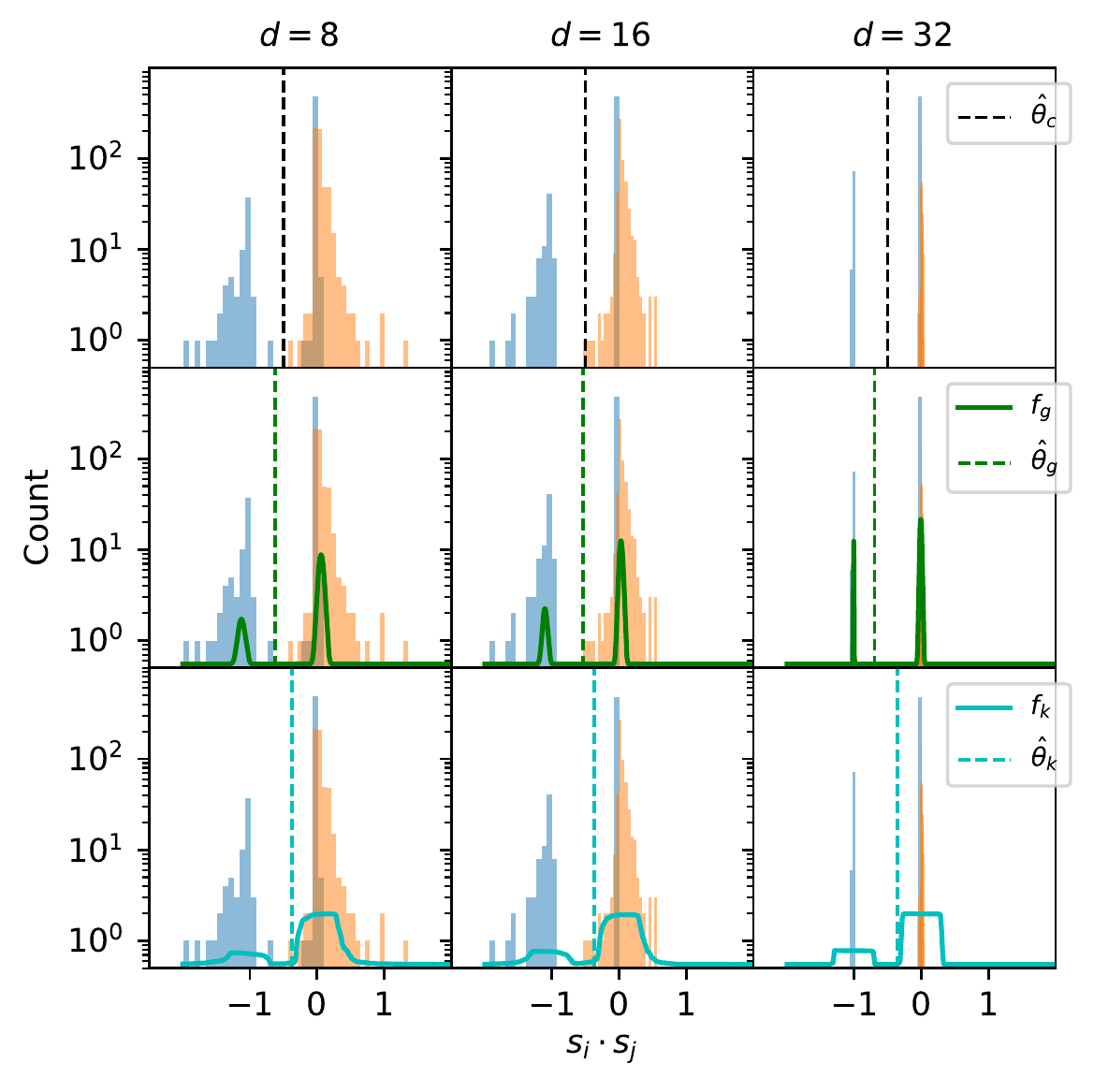}
\caption{\label{fig:hist_glee}
Distribution of dot products $\{s_i \cdot s_j^T\}$ in the Karate Club graph. Columns show different dimension $d$. Dot products of existing edges are in blue. Non-edges are in orange. Each row shows a different estimator of $\theta_{\text{opt}}$. Top row shows the constant value $\hat{\theta}_c = -0.5$. Middle row shows our Gaussian Mixture Model estimator $\hat{\theta}_g$ (Algorithm~\ref{alg:theta-gmm}). Bottom row shows our Kernel Density Estimator  $\hat{\theta}_k$ (Section~\ref{subsub:kdd}).
}
\end{figure}

\subsection{Gaussian Mixture Models}\label{subsub:gmm}
Here we use a Gaussian Mixture Model (GMM) over the distribution of $s_i \cdot s_j$. The model will find the two peaks near $-1$ and $0$ and fit each to a Gaussian distribution. Once the densities of said Gaussians have been found, say $f_1$ and $f_2$, we define the estimator $\hat{\theta}_g$ as that point at which the densities are equal (see Figure~\ref{fig:hist_glee}, bottom row).

However, we found that a direct application of this method yields poor results due to the sparsity of network data sets. High sparsity implies that the peak at $0$ is orders of magnitude higher than the one at $-1$. Thus, the left peak will usually be hidden by the tail of the right one so that the GMM cannot detect it. To solve this issue we take two steps. First, we use a Bayesian version of GMM that accepts priors for the Gaussian means and other parameters. This guides the GMM optimization algorithm to find the right peaks at the right places. Second, we sub-sample the distribution of dot products in order to minimize the difference between the peaks, and then to fix it back after the fit. Concretely, put $r = \sum_{i<j} 1\{s_i \cdot s_j^T < \hat{\theta}_c\}$.  That is, $r$ is the number of dot products less than the constant $\hat{\theta}_c = -0.5$. Instead of fitting the GMM to all the observed dot products, we fit it to the set of all $r$ dot products less than $\hat{\theta}_c$ plus a random sample of $r$ dot products larger than $\hat{\theta}_c$. This temporarily fixes the class imbalance, which we recover after the model has been fit as follows. The GMM fit will yield a density for the sub-sample as $f_g = w_1 f_1 + w_2 f_2$, where $f_i$ is the density of the $i$th Gaussian, and $w_i$ are the mixture weights, for $i=1,2$. Since we sub-sampled the distribution, we will get $w_1 \approx w_2 \approx 0.5$, but we need the weights to reflect the original class imbalance. For this purpose, we define $\hat{w}_1 = \hat{m} / \binom{n}{2}$ and $\hat{w}_2 = 1 - \hat{w}_1$, where $\hat{m}$ is an estimate for the number of edges in the graph. (This can be estimated in a number of ways, for example one may put $\hat{m} = r$, or $\hat{m} = n \log(n)$.) Finally, we define the estimator as the value that satisfies
\begin{equation}\label{eqn:theta-gmm}
\hat{w_1} f_1(\hat{\theta}_g) = \hat{w_2} f_2(\hat{\theta}_g),
\end{equation}
under the constraint that $-1 < \hat{\theta}_g <0$. Since $f_1$ and $f_2$ are known Gaussian densities, Equation~\ref{eqn:theta-gmm} can be solved analytically.

In this case, due to sparsity, the problem of optimizing the GMM is one of non-parametric density estimation with extreme class imbalance. We solve it by utilizing priors for the optimization algorithm, as well as sub-sampling the distribution of dot products, according to some of its known features (i.e., the fact that the peaks will be found near $-1$ and $0$), and we account for the class imbalance by estimating graph sparsity separately. Finally, we define the estimator $\hat{\theta}_g$ according to Equation~\ref{eqn:theta-gmm}. Algorithm~\ref{alg:theta-gmm} gives an overview of this procedure. For a comparison between the effectiveness of the three different estimators $\hat{\theta}_c, \hat{\theta}_k, \hat{\theta}_g$, see Appendix~\ref{app:estimator-comparison}.

\begin{algorithm}
\caption{Estimating $\theta_{opt}$ with a Gaussian Mixture Model}
\label{alg:theta-gmm}
\begin{algorithmic}[1]
    \Procedure{GMM}{$\{s_i\}_{i=1}^n, \hat{\theta}_c, \hat{m}$}
    \State $L \gets \{s_i \cdot s_j^T: s_i \cdot s_j^T < \hat{\theta}_c\}$
    \State $R \gets$ random sample of size $|L|$ of $\{s_i \cdot s_j^T: s_i \cdot s_j^T \geq \hat{\theta}_c\}$
    \State $w_1, w_2, f_1, f_2 \gets$ fit a Bayesian GMM to $L \cup R$
    \State $\hat{w}_1 \gets \hat{m} / \binom{n}{2}$
    \State $\hat{w}_2 \gets 1 - \hat{w_1}$
    \State $\hat{\theta}_g \gets$ solution of $\hat{w}_1 f_1(\theta) = \hat{w}_2 f_2(\theta)$
    \State \textbf{return} $\hat{\theta}_g$
    \EndProcedure
\end{algorithmic}
\end{algorithm}

\section{Estimator Comparison}\label{app:estimator-comparison}
In Section~\ref{sub:reconstruction} and Appendix~\ref{app:estimators} we outlined three different schemes to estimate $\theta_{\text{opt}}$ which resulted in $\hat{\theta}_c, \hat{\theta}_k, \hat{\theta}_g$. \textbf{Which one is the best?} We test each each of these estimators on three random graph models: Erd\"os-R\'enyi (ER) \citep{erdos1960p}, Barab\'asi-Albert (BA) \citep{barabasi1999emergence}, and Hyperbolic Graphs (HG) \citep{krioukov2010hyperbolic}. For each random graph with adjacency matrix $\mathbf{A}$, we compute the Frobenius norm of the difference between the reconstructed  adjacency matrix $\mathbf{\hat{A}}$ using each of the three estimators. In Figure~\ref{fig:estimators} we show our results. We see that $\hat{\theta}_c$ and $\hat{\theta}_k$ achieve similar performance across data sets, while $\hat{\theta}_g$ outperforms the other two for ER at $d=512$, though it has high variability in the other models. From these results we conclude that at low dimensions $d=32$, too much information has been lost and thus there is no hope to learn a value of $\hat{\theta}$ that outperforms the heuristic $\hat{\theta}_c = -0.5$. However, at larger dimensions, the estimators $\hat{\theta}_g$ and $\hat{\theta}_k$ perform better, with different degrees of variability. We conclude also that no single heuristic for $\hat{\theta}$ is best for all types of graphs. In the rest of our experiments we use $\hat{\theta}_k$ as our estimator for $\theta_{opt}$. We highlight that even though $\theta_k$ is better than $\theta_c$ in some data sets, it might be costly to compute, while $\theta_c$ incurs no additional costs.

\begin{figure}[ht]
\centering
\includegraphics[width=0.99\columnwidth]{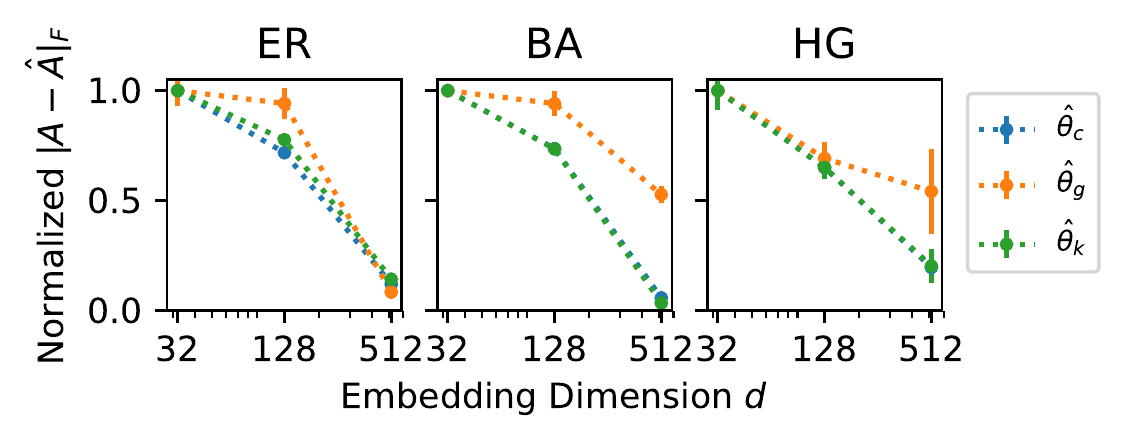}
\caption{\label{fig:estimators}
Estimator comparison. We compute the three different estimators on three different random graph models: Erd\"os-R\'enyi (ER), Barab\'asi-Albert (BA), and Hyperbolic Graphs (HG). All graphs have $n=10^3$ nodes, and average degree $\langle k \rangle \approx 8$. Hyperbolic graphs generated with degree distribution exponent $\gamma = 2.3$. We show the average of $20$ experiments; error bars mark two standard deviations. Values normalized in the range $[0, 1]$.}
\end{figure}

\bibliographystyle{ACM-Reference-Format}
\bibliography{references.bib}

\end{document}